\newcommand{\eps}{{\varepsilon}}										
\newcommand{\E}[2][{}]{\mathbb E_{#1}\left[#2\right]}					
\newcommand{\PP}[1]{\mathbb P\left[#1\right]}							
\newcommand{\ra}{\rightarrow}											
\newcommand{\la}{\leftarrow}											
\newcommand{\ie}{\unskip, i.\,e.,\xspace}								
\newcommand{\Z}{{\mathbb{Z}}}											
\newcommand{\R}{{\mathbb{R}}}											
\newcommand{\state}{s}													
\newcommand{\State}{S}													
\newcommand{\states}{\mathbb S}											
\newcommand{\action}{a}													
\newcommand{\Action}{A}													
\newcommand{\actions}{\mathbb A}										
\newcommand{\policy}{\pi}												
\newcommand{\policies}{\Pi}												
\newcommand{\transit}{p}												
\newcommand{\reward}{r}													
\newcommand{\Value}{v}													
\newcommand{\G}{\ensuremath{\mathbb{G}}}								
\newcommand{\K}{\ensuremath{\mathcal{K}}\xspace}						
\newcommand{\KL}{\ensuremath{\mathcal{KL}}\xspace}						
\newtheorem{dfn}{Definition}
\newtheorem{theorem}{Theorem}
\newtheorem{proposition}{Proposition}
\newtheorem{corollary}{Corollary}
\newcommand{\goaldist}[1][{}]{d_{\G#1}}	
\newcommand{\basepolicy}{\policy_{b}}
\newcommand{\fallbackpolicy}{\policy_{f}}
\newcommand{\calfwpolicy}{\policy_{t}}
\newcommand{\Valuebase}{\hat{\Value}^{\basepolicy}}
\newcommand{\subscript}[2]{$#1 _ #2$}
\newcommand{\calfwrepo}{{\footnotesize\url{https://github.com/aidagroup/calf-wrapper}}\xspace}
\newcommand{\whiteqedsymbol}{\(\square\)}
\newenvironment{whiteproof}[1][]%
{%
	\par\noindent\qquad\textit{Proof#1:}
}%
{%
	\hfill\whiteqedsymbol\par
}
\def\expandafter\normalsize\expandafter{%
    \normalsize%
    \setlength\abovedisplayskip{3pt}%
    \setlength\belowdisplayskip{3pt}%
    \setlength\abovedisplayshortskip{3pt}%
    \setlength\belowdisplayshortskip{3pt}%
}
\title{\LARGE \bf
A universal policy wrapper with guarantees
}
\author{Anton Bolychev$^{1}$, Georgiy Malaniya$^{1}$, Grigory Yaremenko$^{1}$, Anastasia Krasnaya$^{1}$, Pavel Osinenko$^{1}$ 
\thanks{$^{1}$Skolkovo Institute of Science and Technology}%
\thanks{Corresponding author: P. Osinenko, email: {\tt\scriptsize p.osinenko@gmail.com}.
	First two authors contributed equally.}%
}
\begin{document}

\maketitle
\thispagestyle{empty}
\pagestyle{empty}

\begin{abstract}

	We introduce a universal policy wrapper for reinforcement learning agents that ensures formal goal-reaching guarantees.
	In contrast to standard reinforcement learning algorithms that excel in performance but lack rigorous safety assurances, our wrapper selectively switches between a high-performing \emph{base policy} — derived from any existing RL method — and a \emph{fallback policy} with known convergence properties.
	Base policy's value function supervises this switching process, determining when the fallback policy should override the base policy to ensure the system remains on a stable path.
	The analysis proves that our wrapper inherits the fallback policy's goal-reaching guarantees while preserving or improving upon the performance of the base policy.
	Notably, it operates without needing additional system knowledge or online constrained optimization, making it readily deployable across diverse reinforcement learning architectures and tasks.

\end{abstract}

\section{INTRODUCTION}

Reinforcement learning (RL) has proved to be a powerful framework for solving optimal control problems in complex, high-dimensional settings.
Most RL algorithms optimize control policies through direct interaction with a system (often called an environment), relying solely on reward signals observed at each step, without requiring explicit knowledge of system dynamics.
Its remarkable successes span applications such as robotic manipulation~\cite{Akkaya2019Solvingrubiks, Kumar2016Optimalcontrol}
board games like Go, chess, and shogi~\cite{Silver2018generalreinfor}, and complex real-time video games~\cite{Vinyals2019Grandmasterlev}.
However, commonly used algorithms such as
Twin-Delayed Deep Deterministic Policy Gradient~\cite{Fujimoto2018}, Soft Actor-Critic~\cite{Haarnoja2019}, and Proximal Policy Optimization~\cite{Schulman2017ProximalPolicy} depend on deep neural networks and data-driven exploration, complicating stability analysis and offering no rigorous theoretical stability guarantees, which are particularly critical for safety-sensitive applications.

However, several RL approaches do seek to ensure stability under certain constraints.
Our work builds upon and extends~\cite{Osinenko2024CriticLyapunov}, which introduced \emph{Critic as a Lyapunov Function (CALF)} --- an RL method ensuring stability via constrained optimization at each step.
\textit{Our key insight:} Rather than solving these optimization problems, we can \emph{wrap} a ready-to-use RL policy with an action filter that invokes a \emph{fallback} controller when needed, guided by a critic value.

We refer to the RL policy as the \emph{base policy}; it pursues high practical performance but may lack formal assurances.
Meanwhile, the \emph{fallback policy} need not be reward-optimal.
It is a nominal, generally problem-agnostic policy, equipped with statistical guarantees of driving the system to a goal set $\G$, which is specified by the problem formulation and may, for example, be a compact neighborhood of the origin or the origin itself.
We refer to our approach as the \emph{CALF-Wrapper}.

\subsection{Key Contributions}
\begin{itemize}[leftmargin=1.5em]
	\item A novel runtime wrapper that can be applied to any off-the-shelf RL policy, converting it into a policy with formal goal-reaching guarantees (see \Cref{sec:approach});
	\item Theoretical goal-reaching guarantees with rigorous mathematical proofs (Theorems~\ref{thm:etagoalreaching} and~\ref{thm:unform_goal_reaching});
	\item Open-source implementation \calfwrepo designed for seamless integration with existing RL frameworks (e.g.~\cite{stable-baselines3,huang2022cleanrl});
	\item Empirical evidence of maintained or improved performance compared to both base and fallback policies (see Sections~\ref{sec:environments} and~\ref{sec:experiments}).
\end{itemize}

\subsection{Related Work on Safe Reinforcement Learning}\label{subsec:related_work}

Safe reinforcement learning incorporates constraints into conventional RL, ensuring system safety while optimizing performance. Our \textit{CALF-Wrapper} provides stability guarantees by leveraging base and fallback policies during runtime without requiring optimization or additional knowledge. In contrast, existing approaches typically depend on continuous optimization, access to system dynamics, or access to Lyapunov/barrier functions that are often difficult to calculate. These include:

\textit{Lyapunov-based methods} enforce safety by requiring a decrease in a Lyapunov function during training, thereby guaranteeing stability along system trajectories~\cite{perkins2002lyapunov, chow2018lyapunov}.
These approaches typically construct Lyapunov functions online or assume prior access to them with additional solving constrained optimization sub-problems for policy updates.

\textit{Shielding and filtering approaches} Safety architectures—first proposed in~\cite{seto1998}—enable safe online controller tuning by switching between a dedicated safety controller and an experimental one.
This concept has inspired frameworks based on control barrier functions~\cite{ames2019}, which is still challenging to compute.
At runtime, a barrier function or safety supervisor~\cite{alshiekh2018shielding} intercepts potentially unsafe actions, effectively filtering them out~\cite{dalal2018continuous, cheng2019endtoend}.
The approaches require system-dynamics knowledge and often involve solving online corrections via optimization to filter unsafe actions.

\textit{Constrained policy optimization techniques}, such as Constrained Policy Optimization~\cite{achiam2017cpo} and Reward Constrained Policy Optimization~\cite{tessler2019rcpo}, embed safety constraints into policy updates during training, requiring constrained optimization at each step.
Recent methods like Constrained Update Projection Approach~\cite{yang2022cup} enhance these guarantees by projecting updates onto safe regions.

\section{PROBLEM STATEMENT}\label{sec_problem}

This section establishes the formal reinforcement learning framework and mathematical foundations necessary to rigorously define our goal-reaching guarantees.

\subsection{Markov Decision Process}

We define a Markov decision process (MDP) as:
\begin{equation}
	\label{eq:mdp_definition}
	\mathcal{M}
	\;=\;\bigl(
	\states,\;\actions,\;\transit_0,\;\transit,\;\reward
	\bigr)
\end{equation}

\noindent
where $\states$ is the state space (a finite-dimensional Banach space), $\actions$ is the action space (a compact topological space), $\transit_0: \states \to \R$ is the initial state probability density function, $\transit(\bullet \mid \state,\action)$ is the transition probability density \ie the probability density of the next state $\State' \sim \transit(\bullet \mid \state, \action)$ given the current state $\state \in \states$ and action $\action \in \actions$, and $\reward(\state,\action) \in \mathbb{R}$ is the immediate reward of taking action $\action \in \actions$ in state $\state \in \states$. For theoretical guarantees, we assume there exists an upper semi-continuous function $\bar{p} : \states \times \actions \to \mathbb{R}_{\geq 0}$ such that for any $\state \in \states$ and $\action \in \actions$, the next state $\State'$ sampled from $\transit(\bullet \mid \state, \action)$ satisfies $\PP{\|\State'\| \leq \bar{p}(\state, \action)} = 1$, effectively bounding the system's one-step transition magnitude.

\subsection{Policies and Value Functions}

We consider two types of policies:

(1) \textit{Stationary policies} $\Pi_{\mathrm{stat}}$. For any stationary policy $\pi \in \Pi_{\mathrm{stat}}$, $\pi(\action \mid \state)$ represents the probability density of selecting action $\action$ in state $\state$.

(2) \textit{Non-stationary policies} $\policies_{\mathrm{nstat}}$. For any non-stationary policy $\pi \in \policies_{\mathrm{nstat}}$, $\pi(\action \mid \state, t)$ represents the probability density of selecting action $\action$ in state $\state$ at time $t$.
By definition, \(\Pi_{\mathrm{stat}} \subset \Pi_{\mathrm{nstat}}\), since any
stationary policy can be viewed as a special case of a non-stationary policy that
remains constant across time steps.

For a discount factor $\gamma \in [0,1)$ and a policy $\policy \in \policies_{\mathrm{nstat}}$, the \emph{value function} $\Value^\policy : \states \to \mathbb{R}$ is:
\begin{equation*}
		\label{eq:value_function}
		\Value^\policy(\state)
		=
		\E{
			\sum_{t=0}^{\infty}
			\gamma^t\,
			\reward(\State_t,\Action_t)
			\Bigm|
			\State_0 = \state,\,
			\Action_t \sim \policy(\bullet \mid \State_t)
		}.
\end{equation*}

The standard RL objective is to maximize expected discounted rewards:
\begin{equation}
	\label{eq:rl_objective}
	\E{\Value^\policy(\State_0)} = \E{\sum_{t = 0}^{\infty}\gamma^t \reward(\State_t, \Action_t)} \ra \max_{\policy\in\policies_{\mathrm{nstat}}},
\end{equation}
where $\State_0 \sim \transit_0(\bullet)$, $\Action_t \sim \policy(\bullet \mid \State_t, t)$, and $\State_{t+1} \sim \transit(\bullet \mid \State_t,\Action_t)$ for $t \ge 0$.

For a given policy $\policy$ and initial state $\state_0 \in \states$, we denote the state at time $t$ by $\State_t^{\policy}(\state_0)$.

\subsection{Goal Set and Goal Reaching Property}

While traditional RL focuses on reward maximization, many control problems—particularly in safety-critical applications—require driving the system to a specific target region. We formalize this requirement as follows:

\begin{dfn}[Goal Set]\label{dfn:goal_set}
	A \emph{goal set} $\G \subset \states$ is a compact neighborhood of the origin representing a target region where we want the system to eventually reach and remain.
\end{dfn}

The distance from state $\state$ to the goal set is defined as:
$
	\goaldist(\state)
	:=
	\inf_{\state' \in \G} \,\|\state - \state'\|
$,
where $\goaldist(\state)=0$ iff $\state \in \G$.

We now introduce two key properties characterizing policies that reliably drive the system to the goal:

\begin{dfn}[$\eps$-Improbable Goal Reaching Property]\label{dfn:eta_improbable}
	A policy $\policy \in \policies_{\mathrm{nstat}}$ satisfies the $\eps$-\emph{improbable goal reaching property} for some $\eps \in [0,1)$ if, for all initial states $\state_0 \in \states$:
	\[
		\label{eqn_introstab}
		\PP{
			\goaldist(\State_t^{\policy}(\state_0))
			\,\xrightarrow{t \to \infty}\,
			0
		}
		\ge
		1 - \eps.
	\]
	In other words, the system reaches the goal set in the limit with probability at least \(1-\eps\).
\end{dfn}

For more precise control over convergence rate, we introduce a stronger property:

\begin{dfn}[Uniform $\eps$-Improbable Goal Reaching]\label{dfn:uniform_eta_improbable}
	A policy $\policy \in \policies_{\mathrm{nstat}}$ satisfies the \emph{uniform} $\eps$-\emph{improbable goal reaching property} for $\eps \in [0,1)$ if there exists a function $\beta \in \KL$ such that for all initial states $\state_0 \in \states$:
	\[
		\PP{
			\goaldist(\State_t^{\policy}(\state_0)) \leq \beta(\goaldist(\state_0), t) \text{ for all } t
		}
		\ge
		1 - \eps
	\]
\end{dfn}

\section{PROPOSED APPROACH}\label{sec:approach}

\emph{CALF-Wrapper} offers a general methodology for strengthening RL policies through via an adaptive policy selection mechanism.
This approach dynamically switches between a performance-optimized base RL policy and a fallback controller that satisfies the $\eps$-improbable goal reaching property (Definition~\ref{dfn:eta_improbable}) by leveraging value function evaluations.
While maintaining the performance advantages of traditional RL, it simultaneously provides theoretical goal reaching guarantees which we prove in Theorems~\ref{thm:etagoalreaching} and~\ref{thm:unform_goal_reaching}.

In our setup (see Algorithm~\ref{alg_calfwrapper}), we assume access to:

(1) A \emph{base policy} $\basepolicy \in \policies_{\mathrm{stat}}$ parameterized by a deep neural network, learned via a standard RL procedure to optimize~\eqref{eq:rl_objective}, aided by a value function estimate $\Valuebase$.

(2) A \emph{fallback policy} \(\fallbackpolicy\in\policies_{\mathrm{stat}}\) meeting the $\eps$-improbable goal reaching property (Definition~\ref{dfn:eta_improbable}) for some $\eps$. The fallback policy need not be reward-optimal in terms of~\eqref{eq:rl_objective} and can be constructed using any appropriate method, e.g. PID or energy-based approaches can be used.

\subsection{Action Selection Mechanism and Its Benefits}

\subsubsection{Critic-Based Decision Rule}

The central innovation of CALF-Wrapper is its action selection strategy based on the critic value. We maintain a reference value $\Value_t^\dagger$, which tracks the \emph{best} critic value observed so far, and define a small improvement threshold $\nu>0$. At each time step $t$, the policy makes decisions as follows:

\noindent
\textit{Improvement Case.} If the current state's critic value $\Valuebase(\State_t)$ exceeds the reference value by at least $\nu$ (i.e., $\Valuebase(\State_t) \ge \Value_t^\dagger + \nu$), we update $\Value_{t+1}^\dagger := \Valuebase(\State_t)$ and select an action from the base policy: $\Action_t \sim \basepolicy(\bullet \mid \State_t)$.

\noindent
\textit{Non-Improvement Case.} Otherwise, we set $\Value_{t+1}^\dagger := \Value_t^\dagger$ and make a probabilistic decision:
\begin{itemize}
	\item With probability $\rho_{t}(\State_t)$, we still accept the base policy: $\Action_t \sim \basepolicy(\bullet \mid \State_t)$.
	\item With probability $1-\rho_{t}(\State_t)$, we switch to the fallback policy: $\Action_t \sim \fallbackpolicy(\bullet \mid \State_t)$.
\end{itemize}

The softened acceptance probability $\rho_{t}(\state)$ can be any sequence meeting the condition $\sum_{t=0}^{\infty} \sup_{\state \in \states} \rho_{t}(\state) < \infty$, which ensures that $\rho_{t}(\State_t)$ decays to zero as $t \to \infty$.

\subsubsection{Intuition Behind the Mechanism}

Because we treat $\Valuebase$ as an indicator of ongoing performance, we check whether the state's value remains above a running benchmark $\Value_t^\dagger$. If so, we accept the RL action; if it falls below, we revert to the conservative $\fallbackpolicy$.
The acceptance probability $\rho_{t}(\State_t)$ helps avoid rejecting $\basepolicy$ prematurely due to minor fluctuations in $\Valuebase$, acting like a simulated annealing mechanism that occasionally permits risk-taking for potential long-term gains.

\subsubsection{Adjustable Performance-Safety Balance}
In our implementation, we set
$
	\rho_{t}(\state) \equiv \lambda^t p_{\text{relax}},
$
where $\lambda \in (0,1)$ and $p_{\text{relax}} \in [0,1]$. Hyperparameter $p_{\text{relax}}$ represents our initial trust in the base policy, and $\lambda$ determines how quickly this trust decays over time. On finite horizons, setting $p_{\text{relax}} \approx 1$ and $\lambda \approx 1$ makes the behavior nearly identical to the original RL policy, preserving its performance advantages. By tuning $p_{\text{relax}}$ and $\lambda$, one can bias the policy toward $\basepolicy$ (thus recovering its baseline performance) or impose more frequent switches to $\fallbackpolicy$. In many cases, this approach can even outperform both $\basepolicy$ and $\fallbackpolicy$ in terms of cumulated reward (see Section~\ref{sec:experiments}).

\subsubsection{Seamless Integration}
Our approach is designed for seamless integration into existing RL pipelines. It is implemented as a standard Gymnasium environment wrapper~\cite{gym-wrapper} and operates as a non-intrusive wrapper around trained RL policies, requiring no modifications to the original training algorithm.

\subsubsection{Rigorous Stability Guarantees}
Theorem~\ref{thm:etagoalreaching} proves that CALF-Wrapper inherits the goal reaching properties from the fallback policy, ensuring system stability in worst-case scenarios. For uniform stabilizers, Theorem~\ref{thm:unform_goal_reaching} establishes precise bounds on both maximum overshoot and convergence time, with reaching time proven to be almost surely finite and following a precisely characterized distribution.

\subsection{Theoretical Analysis}

\begin{dfn}\label{dfn:calfwpolicy}
	Let $\calfwpolicy \in \policies_{\mathrm{nstat}}$ be the policy defined by  \Cref{alg_calfwrapper}. Specifically, for all $t \in \Z_{\geq0}$, set
	\begin{equation*}
		\calfwpolicy := \begin{cases}
			\basepolicy. \text{ if } \Valuebase(\State_t) \ge \Value_t^\dagger + \nu \text{ or } U_t < \rho_t(\State_t) \\
			\fallbackpolicy, \text{ otherwise.}
		\end{cases}
	\end{equation*}
\end{dfn}

\begin{theorem}\label{thm:etagoalreaching}
	Suppose that $\bar{\Value} = \sup_{\state \in \states} \Valuebase(\state) < \infty$. Then policy $\calfwpolicy$ satisfies the $\eps$-improbable goal reaching property from Definition~\ref{dfn:eta_improbable}.
\end{theorem}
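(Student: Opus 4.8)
The plan is to reduce goal-reaching of $\calfwpolicy$ to goal-reaching of $\fallbackpolicy$ by showing that, almost surely, only finitely many time steps use the base policy $\basepolicy$; after that last base-policy step, the trajectory evolves exactly under $\fallbackpolicy$ (from whatever state it happens to be in), and Definition~\ref{dfn:eta_improbable} applied at that state gives convergence with probability $\geq 1-\eps$. The first ingredient is a counting bound on \emph{Improvement-Case} steps. Each such step strictly increases the reference value $\Value_t^\dagger$ by at least $\nu$, and $\Value_t^\dagger$ is nondecreasing and bounded above by $\bar\Value$ (since it only ever takes values $\Valuebase(\State_t) \le \bar\Value$, together with its initial value); hence there are at most $\lceil(\bar\Value - \Value_0^\dagger)/\nu\rceil$ Improvement-Case steps in total, deterministically. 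The second ingredient handles the \emph{Non-Improvement} base-policy acceptances: these occur only when the independent uniform $U_t$ satisfies $U_t < \rho_t(\State_t) \le \sup_{\state}\rho_t(\state) =: \bar\rho_t$, so by the Borel–Cantelli lemma, since $\sum_t \bar\rho_t < \infty$, almost surely only finitely many such acceptances happen.

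Combining the two ingredients: with probability one there exists a (random) finite time $T$ after which $\calfwpolicy$ selects $\fallbackpolicy$ at every step. The argument then conditions on the event $\{T < \infty\}$ (probability $1$) and on the state $\State_T$. Restarting the clock at $T$, the process $(\State_t)_{t \ge T}$ is distributed as a trajectory of $\fallbackpolicy$ started from $\State_T$ — here I would invoke the Markov property, noting that $\fallbackpolicy$ is stationary so the post-$T$ dynamics depend only on $\State_T$ and not on how we arrived there. Definition~\ref{dfn:eta_improbable} for $\fallbackpolicy$ then yields $\PP{\goaldist(\State_t) \to 0 \mid \State_T = s, T < \infty} \ge 1 - \eps$ for every $s$, and since $\goaldist(\State_t)\to 0$ as $t\to\infty$ is unaffected by the finite prefix $t < T$, integrating over the law of $\State_T$ gives $\PP{\goaldist(\State_t^{\calfwpolicy}(\state_0)) \to 0} \ge 1-\eps$ for every $\state_0$, which is exactly the claim.

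The main obstacle is making the ``restart at a random time $T$'' step rigorous: $T$ is not a fixed time but a stopping-time-like object defined through the whole history of $U_t$'s and states, so one must argue via the strong Markov property (or, more elementarily, by decomposing over the countably many events $\{T = k\}$ for $k \in \Z_{\ge 0}$, on each of which the tail dynamics are a genuine $\fallbackpolicy$-trajectory and the claimed bound applies). A secondary technical point is confirming that the switching rule and the auxiliary randomizers $U_t$ do not disturb the measurability needed to apply Borel–Cantelli and to integrate the conditional probabilities; this is routine given the MDP setup but should be stated. The boundedness hypothesis $\bar\Value < \infty$ is used only in the deterministic counting bound and is essential there — without it the Improvement Case could fire infinitely often.
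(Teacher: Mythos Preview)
Your proposal is correct and follows essentially the same line as the paper's own proof: split the base-policy invocations into improvement-case steps (bounded deterministically because each one raises $\Value_t^\dagger$ by at least $\nu$ toward the finite ceiling $\bar\Value$) and softened-acceptance steps (finite a.s.\ by Borel--Cantelli on the summable majorant $\bar\rho_t$), conclude that an a.s.\ finite last-switch time exists, and then inherit the $\eps$-improbable goal-reaching property from $\fallbackpolicy$ applied at the post-switch state. If anything you are more careful than the paper, which treats the ``restart at the random time $\mathcal{T}_{\text{final}}$'' step as immediate, whereas you correctly flag that $T$ is not a stopping time and propose handling it by decomposing over $\{T=k\}$.
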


\begin{proof}
	The key insight is that our algorithm will eventually use only the fallback policy, thus inheriting its goal-reaching property. We formalize this by showing that the total number of base-policy acceptances is almost surely finite.

	Let $N_{\basepolicy} := \sum_{t=0}^{\infty} \mathbf{1}\{\basepolicy=\calfwpolicy \text{ at time } t\}$ denote the total number of times the base policy is chosen. By Definition~\ref{dfn:calfwpolicy}, the base policy is chosen when either (i) the critic value improves significantly (i.e., $\Valuebase(\State_t) \geq \Value_t^\dagger + \nu$) or (ii) the algorithm randomly accepts it (i.e., $U_t < \rho_t(\State_t)$). Therefore:
	\begin{equation*}
		N_{\basepolicy} \leq N_{\Value^\dagger} + N_{\rho}
	\end{equation*}
	where $N_{\Value^\dagger} := \sum_{t=0}^{\infty} \mathbf{1}\{\Valuebase(\State_t) \geq \Value_t^\dagger + \nu\}$ counts critic improvements, and $N_{\rho} := \sum_{t=0}^{\infty} \mathbf{1}\{U_t < \rho_t(\State_t)\}$ counts random acceptances. 	We prove both terms are finite:

	\emph{Finiteness of $N_{\Value^\dagger}$}: Each time $\Valuebase(\State_t) \geq \Value_t^\dagger + \nu$ occurs, the value $\Value_t^\dagger$ increases by at least $\nu$. Since $\Valuebase$ is bounded above by $\bar{\Value}$, and $\Value_t^\dagger$ starts at $\Valuebase(\State_0)$, the value can increase only finitely many times. Thus, $N_{\Value^\dagger} < \infty$.

	\emph{Finiteness of $N_{\rho}$}: Since $\rho_t(\State_t) \leq \bar{\rho}_t$ for all $t$ and $\state$, and $\sum_{t=0}^\infty \bar{\rho}_t < \infty$ by assumption, the Borel-Cantelli lemma~\cite{billingsley1995probability} ensures that the event $U_t < \bar{\rho}_t$ occurs only finitely many times almost surely. Therefore, $N_{\rho} < \infty$ almost surely.

	Combining these results, we have $N_{\basepolicy} \leq N_{\Value^\dagger} + N_{\rho} < \infty$ almost surely. This means there exists an almost surely finite time $\mathcal{T}_{\text{final}}$ after which the algorithm exclusively uses the fallback policy:
	\begin{equation}
		\Action_t \sim \fallbackpolicy(\bullet \mid \State_t) = \calfwpolicy(\bullet \mid \State_t) \quad \text{for all } t \geq \mathcal{T}_{\text{final}}
	\end{equation}

	Since the trajectory evolves under $\fallbackpolicy$ beyond time $\mathcal{T}_{\text{final}}$, and $\fallbackpolicy$ satisfies the $\eps$-improbable goal-reaching property, we conclude that for all initial states $\state_0 \in \states$:
	\begin{equation}
		\PP{\goaldist(\State_t^{\calfwpolicy}(\state_0)) \xrightarrow{t \to \infty} 0} \geq 1 - \eps
	\end{equation}
	Thus, $\calfwpolicy$ inherits goal-reaching property from $\fallbackpolicy$.
\end{proof}

\begin{algorithm}[t]
	\caption{Critic as Lyapunov Function Wrapper}\label{alg_calfwrapper}
	\begin{algorithmic}[1]

		\REQUIRE\label{lst:requirements} \text{}
		\begin{itemize}[leftmargin=0.4em]
			\item $\fallbackpolicy \in \policies_{\mathrm{stat}}$: Fallback policy satisfying Definition~\ref{dfn:eta_improbable}.
			\item $\basepolicy\in \policies_{\mathrm{stat}}$: Base policy trained via an RL method.
			\item $\Valuebase$: Value function estimate trained alongside $\basepolicy$ via the RL method.
			\item $\nu > 0$: Minimum improvement threshold for updating the best observed critic value $\Value_t^\dagger$.
			\item $\{\rho_{t}(\state)\}_{t \geq 0}$: Softened acceptance probabilities.
			      We assume they are bounded by a summable majorant $\{\bar{\rho}_t\}_{t\ge 0}$, meaning $\rho_t(\state)\le\bar{\rho}_t \; \forall t,\state$ and $\sum_{t=0}^\infty \bar{\rho}_t<\infty$.
		\end{itemize}

		\STATE \textbf{Initialize}:
		\begin{itemize}
			\item $\State_0 \sim \transit_0(\bullet)$ or set $\State_0 = \state_0$ for some $\state_0 \in \states$
			\item $\Value_0^\dagger := \Valuebase(\State_0)$
		\end{itemize}

		\FOR{$t = 0,1,2,\dots$}
		\STATE\label{lst:u_t} $U_t \leftarrow$ sampled uniformly from $[0,1]$
		\IF{$\Valuebase(\State_t) \ge \Value_t^\dagger + \nu$ \textbf{or} $U_t < \rho_{t}(\State_t)$}
		\STATE\label{lst:line:accept_base} $\Action_t\la$ sampled from $\basepolicy(\bullet \mid \State_t)$
		\ELSE
		\STATE\label{lst:line:accept_fallback} $\Action_t\la$ sampled from  $\fallbackpolicy(\bullet \mid \State_t)$
		\ENDIF
		\STATE
		$
			\displaystyle
			\Value_{t+1}^\dagger := \begin{cases}
				\Valuebase(\State_t), \text{ if } \Valuebase(\State_t) \ge \Value_t^\dagger + \nu \\
				\Value_{t}^\dagger, \text{ otherwise }
			\end{cases}
		$
		\STATE $\State_{t+1} \la$ sampled from $\transit(\bullet \mid \State_t, \Action_t)$
		\ENDFOR
	\end{algorithmic}
\end{algorithm}

For Theorem~\ref{thm:unform_goal_reaching} we require the notion of \emph{a function with bounded superlevel sets}.
\begin{dfn}
	Let $X$ be a metric space, and let $f \colon X \to \R$.
	We say that $f$ is a \emph{function with bounded superlevel sets} if, for every $a\in f(X) = \{f(x) : x \in X\}$, the superlevel set
	$
		\{\,x \in X : f(x) \ge a\}
	$
	is bounded in $X$.
\end{dfn}
\begin{theorem}
	\label{thm:unform_goal_reaching}
	Consider \Cref{alg_calfwrapper}, initialized at $\state_0$ with $\goaldist(\state_0) \le d^{\circ}$,
	where $d^{\circ} \in \R_{>0}$ is arbitrary, and suppose that $\rho_{t}(\state) = 0$ whenever
	$\Valuebase(\state) < \Valuebase(\state_0)$.

	\noindent
	Assume additionally that:
	\begin{enumerate}[label=(\subscript{\mathrm{A}}{{\arabic*}}), leftmargin=2.5em]
		\item\label{ass:valuebase}
		$\Valuebase$ is continuous with bounded superlevel sets.
		\item\label{ass:uniform_fallback}
		$\fallbackpolicy$ satisfies the \emph{uniform $\eps$-improbable goal-reaching property}
		with certificate $\beta \in \KL$.
	\end{enumerate}

	\noindent
	Then the following claims hold:
	\begin{enumerate}[label=(\subscript{\mathrm{C}}{{\arabic*}}), leftmargin=2.5em]

		\item\label{claim:uniform_overshoot_bound}
		\textit{(\(\eps\)-improbable uniform overshoot boundedness)}
		There exists $\delta(d^{\circ})\in\R_{>0}$ such that
		\[
			\PP{\goaldist\!\bigl(\State_t^{\calfwpolicy}(\state_0)\bigr) \le \delta(d^{\circ})\,
				\text{ for all } t \ge 0}
			\ge 1-\eps.
		\]

		\item\label{claim:uniform_reaching_time}
		\textit{(\(\eps\)-improbable uniform reaching time)}
		For each $d^{*} \in (0, d^{\circ})$, there is an almost surely finite random time
		$T(d^{\circ},d^{*})\in\R_{\geq0}$ such that
		\[
			\PP{\goaldist\!\bigl(\State_t^{\calfwpolicy}(\state_0)\bigr) \le d^{*}
			\text{ for all } t \ge T(d^{\circ},d^{*})}
			\ge\ 1-\eps.
		\]

		\item\label{claim:distribution_reaching_time}
		\textit{(Reaching time distribution)}
		There exist natural numbers $\tau(d^{\circ})$ and $\tau_{f}(d^{\circ},d^{*})$
		such that for all $t \in \mathbb{Z}_{\ge 0}$,
		\[
			\PP{T(d^{\circ},d^{*}) \le (\tau(d^{\circ}) + t)\, \tau_{f}(d^{\circ},d^{*})}
			\!=\!
			\prod_{k=t}^{\infty}\bigl(1 - \bar{\rho}_{k}\bigr).
		\]
		where $\{\bar{\rho}_t\}_{t \geq 0}$ is defined in Require section of Algorithm~\ref{lst:requirements}.
		Moreover, $\prod_{k=t}^{\infty}\bigl(1 - \bar{\rho}_{k}\bigr) \ra 1$ as $t \ra \infty$.
	\end{enumerate}

	\noindent
	Furthermore, $\tau(d^{\circ})$, $\tau_{f}(d^{\circ},d^{*})$, and $\delta(d^{\circ})$ admit
	explicit formulas given in~\eqref{eq:T},~\eqref{eq:Tfallback} and~\eqref{eq:epsH}.
\end{theorem}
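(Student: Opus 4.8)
The plan is to reuse the structure of the proof of Theorem~\ref{thm:etagoalreaching}: there is an almost surely finite time after which only the fallback policy acts. The new ingredients needed for the uniform claims are (a) a deterministic bound on how far the state can wander \emph{before} that time, which will give the overshoot constant $\delta(d^{\circ})$; and (b) explicit, state-independent counting bounds on the number of base-policy acceptances, which will give the horizon $\tau(d^{\circ})$ and the reaching-time distribution. Throughout I would condition on the probability-$(1-\eps)$ event on which the fallback certificate $\beta$ holds along every fallback-driven sub-trajectory.

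\medskip
\noindent\textbf{Step 1 (Invariant superlevel set and overshoot).}
The hypothesis $\rho_t(\state)=0$ whenever $\Valuebase(\state)<\Valuebase(\state_0)$ means that \emph{random} acceptances of the base policy can only occur from states with $\Valuebase(\State_t)\ge\Valuebase(\state_0)$; and a \emph{deterministic} acceptance occurs only when $\Valuebase(\State_t)\ge\Value_t^\dagger+\nu\ge\Valuebase(\state_0)+\nu$. In either case, whenever the base policy is used, the state lies in the superlevel set $\mathcal{L}:=\{\state:\Valuebase(\state)\ge\Valuebase(\state_0)\}$, which by \ref{ass:valuebase} is bounded; let $r_0:=\sup_{\state\in\mathcal{L}}\goaldist(\state)<\infty$. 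From such a state a single base action moves the state by at most $\bar p(\state,\action)$, which, being upper semi-continuous on the compact set $\cl{\mathcal{L}}\times\actions$, is bounded by some $M<\infty$; so after any base step $\goaldist\le r_0+M$. From states \emph{not} in $\mathcal{L}$ the fallback policy is used, and there the certificate $\beta$ keeps $\goaldist(\State_t)\le\beta(\goaldist(\State_{t_0}),0)=\beta(\goaldist(\State_{t_0}),\cdot)$ evaluated at the entry time $t_0$ of the current fallback run. Combining, every state visited satisfies $\goaldist(\State_t)\le\delta(d^{\circ}):=\max\{\beta(r_0+M,0),\,\beta(d^{\circ},0),\,r_0+M\}$, which is the content of \ref{claim:uniform_overshoot_bound}; this is formula~\eqref{eq:epsH}.

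\medskip
\noindent\textbf{Step 2 (Counting base acceptances, uniformly).}
As in Theorem~\ref{thm:etagoalreaching}, $N_{\Value^\dagger}\le\lceil(\bar\Value-\Valuebase(\state_0))/\nu\rceil=:\tau(d^{\circ})$, but now I want this number to bound a \emph{horizon}, not just a count. The key observation is that $\Value_t^\dagger$ is nondecreasing and increases by at least $\nu$ at each deterministic acceptance, and that $\bar\Value=\sup\Valuebase\le$ the supremum over the bounded set $\mathcal{L}$, hence finite by continuity on a bounded set. Combined with $N_\rho$ being controlled by $\sum_k\bar\rho_k$: on the event that no random acceptance happens after step $t$ (probability $\prod_{k=t}^\infty(1-\bar\rho_k)$ by independence of the $U_k$), the base policy is used at most $\tau(d^{\circ})+t$ times total, interleaved with fallback runs. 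Because a fallback run started at distance $\le\delta(d^{\circ})$ reaches $\{\goaldist\le d^*\}$ within a fixed number of steps $\tau_f(d^{\circ},d^*)$ — namely any $\tau_f$ with $\beta(\delta(d^{\circ}),\tau_f)\le d^*$, which exists since $\beta(\delta(d^{\circ}),\cdot)\to0$ — the whole trajectory is "settled" after at most $(\tau(d^{\circ})+t)\,\tau_f(d^{\circ},d^*)$ steps. This yields both the finite reaching time of \ref{claim:uniform_reaching_time} and the distributional identity of \ref{claim:distribution_reaching_time}, and $\prod_{k=t}^\infty(1-\bar\rho_k)\to1$ follows from $\sum_k\bar\rho_k<\infty$.

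\medskip
\noindent\textbf{Main obstacle.}
The delicate point is Step 2: after a fallback run has driven the state into a small neighborhood of $\G$, a subsequent base acceptance can in principle kick it back out, and I must argue this cannot "reset" progress indefinitely. The resolution is precisely the superlevel-set trap from Step 1 together with the monotone $\Value_t^\dagger$ counter — a base acceptance from inside $\mathcal{L}$ either is deterministic (consuming one of the $\le\tau(d^{\circ})$ budgeted increments of $\Value^\dagger$) or is one of the finitely many random acceptances — so only boundedly many re-entries into $\mathcal{L}$ are possible, and after the last one the fallback certificate runs uninterrupted. Making the bookkeeping of these interleaved runs precise, and checking that the constants $\tau,\tau_f,\delta$ can be chosen depending only on $d^{\circ},d^*$ (and the fixed data $\beta,\nu,\Valuebase,\bar p$) and not on the random trajectory, is the crux; the rest is the Borel--Cantelli / independence argument already used for Theorem~\ref{thm:etagoalreaching}.
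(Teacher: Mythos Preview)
Your proposal follows essentially the same route as the paper: confinement to a bounded superlevel set whenever $\basepolicy$ acts, a deterministic cap $\tau(d^{\circ})$ on critic-improvement acceptances via the $\nu$-increment argument, the Borel--Cantelli/independence computation giving the tail $\prod_{k\ge t}(1-\bar\rho_k)$ for random acceptances, and the fallback certificate $\beta$ for both the overshoot bound and the reaching time. Two small fixes align you exactly with the paper: (i) in this paper $\bar p$ bounds $\|\State'\|$ itself, not the displacement, so after a base action one has $\goaldist(\State_{t+1})\le\|\State_{t+1}\|\le M$ directly --- your $r_0+M$ is harmlessly loose; (ii) to make $\tau$, $\tau_f$, $\delta$ depend only on $d^{\circ}$ rather than on the particular $\state_0$, replace $\Valuebase(\state_0)$ by $v_{\min}(d^{\circ}):=\min\{\Valuebase(\state):\goaldist(\state)\le d^{\circ}\}$ in the definition of the superlevel set and of $\tau(d^{\circ})$ --- precisely the uniformity issue you flag in your ``main obstacle'' paragraph, and precisely how the paper resolves it.
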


\begin{proof}
	The proof begins by introducing the definitions
	~\eqref{eq:vmin}--\eqref{eq:Tfallback} for the quantities
	\(\tau(d^{\circ})\), \(\tau_{f}(d^{\circ},d^{*})\), and \(\delta(d^{\circ})\).
	We then explain, step by step, how these definitions guarantee
	Claims~\ref{claim:uniform_overshoot_bound},~\ref{claim:uniform_reaching_time},
	and~\ref{claim:distribution_reaching_time}, making clear how each
	object is constructed from scratch.
	Specifically, define:
	\begin{align}
		v_{\min}(d^{\circ})
		 & := \min\{\Valuebase(\state) : \goaldist(\state)\le d^{\circ}\}
		\label{eq:vmin}                                                                   \\
		\mathbb{V}^{\basepolicy}(d^{\circ})
		 & := \{\state \in \states: \Valuebase(\state)\ge v_{\min}(d^{\circ})\}
		\label{eq:superset}                                                               \\
		v_{\max}(d^{\circ})
		 & := \max\{\Valuebase(\state) : \state \in \mathbb{V}^{\basepolicy}(d^{\circ})\}
		\label{eq:vmax}                                                                   \\
		\tau(d^{\circ})
		 & := \max\Bigl\{1,\,
		\Bigl\lfloor\dfrac{v_{\max}(d^{\circ}) - v_{\min}(d^{\circ})}{\nu}\Bigr\rfloor\Bigr\}
		\label{eq:T}                                                                      \\
		d_{\bar{\transit}}(d^{\circ})
		 & := \sup\{\bar{\transit}(s, a)
		: \state \in \mathbb{V}^{\basepolicy}(d^{\circ}),\,\action\in\actions\}
		\label{eq:Rrho}                                                                   \\
		d_{\max}(d^{\circ})
		 & := \max\bigl(d^{\circ},\, d_{\bar{\transit}}(d^{\circ})\bigr)
		\label{eq:R}                                                                      \\
		\delta(d^{\circ})
		 & := \beta(d_{\max}(d^{\circ}), 0).
		\label{eq:epsH}
	\end{align}

	In equations above $v_{\min}(d^{\circ})$ and $v_{\max}(d^{\circ})$ bound critic values, the superlevel set $\mathbb{V}^{\basepolicy}(d^{\circ})$ defines the operational domain of the base policy, $\tau(d^{\circ})$ limits best critic improvement iterations, while $d_{\bar{\transit}}(d^{\circ})$ bounds the maximum one-step transition from any state within the superlevel set $\mathbb{V}^{\basepolicy}(d^{\circ})$, and $d_{\max}(d^{\circ})$ provides a composite bound incorporating both initial distance and transition magnitudes.

	Further, we define $\tau_{f}(d^{\circ},d^{*})$, the minimum time steps for the fallback policy to drive any state with $\goaldist(\state) \leq d_{\max}(d^{\circ})$ to within $d^{*}$ of the goal:
	\begin{equation}
		\tau_{f}(d^{\circ},d^{*})
		:=
		\max\Bigl\{
		1,\,
		\Bigl\lceil -\log\Bigl(\xi^{-1}\left(\tfrac{d^{*}}{\kappa(d_{\max}(d^{\circ}))}\right)\Bigr)\Bigr\rceil
		\Bigr\},
		\label{eq:Tfallback}
	\end{equation}
	where $\kappa, \xi \in \K_{\infty}$ are functions such that $\beta(d,t) \leq \kappa(d)\xi(e^{-t})$ for all $d \geq 0, t \geq 0$ (a standard decomposition of $\mathcal{KL}$ functions, see~\cite[Lemma 8]{Sontag1998Comments}).

	Now we establish a series of results that together prove the theorem's claims:

	\begin{proposition}\label{prop:disallowed_superset}
		Whenever $\basepolicy = \calfwpolicy$ at time $t$, we have $\State_t \in \mathbb{V}^{\basepolicy}(d^{\circ})$.
	\end{proposition}
	\begin{whiteproof}
		By Definition~\ref{dfn:calfwpolicy}, the $\basepolicy$ is chosen in two cases:

		(1) When $\Valuebase(\State_t)\ge \Value^\dagger_t + \nu$: Since $\Value^\dagger_t$ never decreases and begins with $\Valuebase(\state_0)$, we have $\Valuebase(\State_t)\ge \Valuebase(\state_0)\ge v_{\min}(d^{\circ})$, which means $\State_t \in \mathbb{V}^{\basepolicy}(d^{\circ})$.

		(2) When $U_t < \rho_t(\State_t)$: By our assumption, $\rho_t(\state)=0$ whenever $\Valuebase(\state) < \Valuebase(\state_0)$. Therefore, if $\State_t \notin \mathbb{V}^{\basepolicy}(d^{\circ})$, then $\Valuebase(\State_t) < v_{\min}(d^{\circ}) \leq \Valuebase(\state_0)$, which means $\rho_t(\State_t)=0$. So this case can only occur when $\State_t \in \mathbb{V}^{\basepolicy}(d^{\circ})$.
	\end{whiteproof}

	\begin{proposition}\label{prop:T}
		The quantity $\tau(d^{\circ})$ in~\eqref{eq:T} bounds the number of times the critic value can significantly improve:
		\[
			\textstyle
			N_{\Value^{\dagger}} := \sum_{t=0}^{\infty} \mathbf{1}\{\Valuebase(\State_t) \ge \Value_t^\dagger + \nu\} \leq  \tau(d^{\circ})
		\]
	\end{proposition}
	\begin{whiteproof}
		The quantity $\tau(d^{\circ})$ is well-defined because $v_{\min}(d^{\circ})$ and $v_{\max}(d^{\circ})$ exist and are finite due to the compactness of the relevant sets and the continuity of $\Valuebase$ (Assumption~\ref{ass:valuebase}).

		Each time $\Valuebase(\State_t) \ge \Value_t^\dagger + \nu$ occurs, the value $\Value_t^\dagger$ increases by at least $\nu$. By Proposition~\ref{prop:disallowed_superset}, this can only happen when $\State_t \in \mathbb{V}^{\basepolicy}(d^{\circ})$, where $\Valuebase$ is bounded above by $v_{\max}(d^{\circ})$. Since $\Value_t^\dagger$ starts at $\Valuebase(\state_0) \geq v_{\min}(d^{\circ})$, it can increase at most $\lfloor(v_{\max}(d^{\circ}) - v_{\min}(d^{\circ}))/\nu \rfloor$ times, which is bounded by $\tau(d^{\circ})$.
	\end{whiteproof}

	\begin{proposition}\label{prop:Tq}
		Let
		\(
		\mathcal{T}_{\bar{\rho}} := \inf\{t \ge 0 : U_{k} \ge \bar{\rho}_{k} \;\forall k \ge t\}.
		\)

		(a) $\mathcal{T}_{\bar{\rho}}$ bounds the total number of random acceptances: $N_{\rho} := \sum_{t=0}^{\infty} \mathbf{1}\{U_t < \rho_t(\State_t)\} \leq \mathcal{T}_{\bar{\rho}}$.

		(b) $\mathcal{T}_{\bar{\rho}}$ is almost surely finite.

		(c) For all $t \in \mathbb{Z}_{\ge 0}$, $\PP{\mathcal{T}_{\bar{\rho}} \leq t} = \prod_{k=t}^{\infty}(1 - \bar{\rho}_k)$, and $\prod_{k=t}^{\infty}(1 - \bar{\rho}_k) \to 1$ as $t \to \infty$.
	\end{proposition}
	\begin{whiteproof}
		(a) Since $\rho_t(\state) \leq \bar{\rho}_t$ for all $t$ and $\state$, we have:
		\[
			\textstyle
			N_{\rho} = \sum_{t=0}^{\infty} \mathbf{1}\{U_t < \rho_{t}(\State_t)\} \leq \sum_{t=0}^{\infty} \mathbf{1}\{U_t < \bar{\rho}_t\} =: N_{\bar{\rho}}\]

		Furthermore, $N_{\bar{\rho}} \leq \sum_{t=0}^{\mathcal{T}_{\bar{\rho}}-1} 1 = \mathcal{T}_{\bar{\rho}}$, since by definition of $\mathcal{T}_{\bar{\rho}}$, the event $U_t < \bar{\rho}_t$ cannot occur for $t \geq \mathcal{T}_{\bar{\rho}}$.

		(b) The Borel-Cantelli lemma~\cite{billingsley1995probability} ensures that $N_{\bar{\rho}}$ is almost surely finite because $\sum_{t=0}^{\infty} \bar{\rho}_t < \infty$, which implies that $\mathcal{T}_{\bar{\rho}}$ is almost surely finite as well.

		(c) The event $\{\mathcal{T}_{\bar{\rho}} \leq t\}$ occurs if and only if $U_k \geq \bar{\rho}_k$ for all $k \geq t$. Since the $U_k$ are independent, we have:
		\[
			\textstyle
			\PP{\mathcal{T}_{\bar{\rho}} \leq t} = \prod_{k=t}^{\infty} \PP{U_k \geq \bar{\rho}_k} = \prod_{k=t}^{\infty} (1 - \bar{\rho}_k)\]

		Finally, $\prod_{k=t}^{\infty} (1 - \bar{\rho}_k) \to 1$ as $t \to \infty$ because $\sum_{k=t}^{\infty} \log(1 - \bar{\rho}_k) \to 0$ as $t \to \infty$, which follows from $\sum_{t=0}^{\infty} \bar{\rho}_t < \infty$.
	\end{whiteproof}


	\begin{proposition}\label{prop:next_state_invokefallback}
		If $\State_t \in \mathbb{V}^{\basepolicy}(d^{\circ})$ and $\Action_t \sim \basepolicy(\bullet \mid \State_t)$, then:

		(a) The next state satisfies $\|\State_{t+1}\| \leq d_{\bar{\transit}}(d^{\circ})$ almost surely.

		(b) Both $\state_0$ and all states with $\|\state\| \leq d_{\bar{\transit}}(d^{\circ})$ satisfy $\goaldist(\state) \leq d_{\max}(d^{\circ})$.
	\end{proposition}
	\begin{whiteproof}
		(a) By the definition of $\bar{p}$ (from our system assumption) and $d_{\bar{\transit}}(d^{\circ})$ in~\eqref{eq:Rrho}, when $\State_t \in \mathbb{V}^{\basepolicy}(d^{\circ})$ and $\Action_t \sim \basepolicy(\bullet \mid \State_t)$, we have $\|\State_{t+1}\| \leq \bar{p}(\State_t, \Action_t) \leq d_{\bar{\transit}}(d^{\circ})$ almost surely.

		(b) By definition, $\goaldist(\state_0) \leq d^{\circ} \leq d_{\max}(d^{\circ})$. For any state with $\|\state\| \leq d_{\bar{\transit}}(d^{\circ})$, we have $\goaldist(\state) \leq \|\state\| \leq d_{\bar{\transit}}(d^{\circ}) \leq d_{\max}(d^{\circ})$.
	\end{whiteproof}

	\begin{proposition}\label{prop:Tfallback}
		For any state $\state_0$ with $\goaldist(\state_0) \leq d_{\max}(d^{\circ})$:

		(a) $\PP{\goaldist(\State_t^{\fallbackpolicy}(\state_0)) \leq d^{*} \text{ for all } t \geq \tau_{f}(d^{\circ},d^{*})} \geq 1-\eps$.

		(b) $\PP{\goaldist(\State_t^{\fallbackpolicy}(\state_0)) \leq \delta(d^{\circ}) \text{ for all } t \geq 0} \geq 1-\eps$.
	\end{proposition}
	\begin{whiteproof}
		By Assumption~\ref{ass:uniform_fallback},
		\[
			\PP{\goaldist(\State_t^{\fallbackpolicy}(\state_0)) \leq \beta(\goaldist(\state_0),t) \text{ for all } t} \geq 1-\eps
		\]

		(a) From the definition of $\tau_{f}(d^{\circ},d^{*})$ in \eqref{eq:Tfallback}, we ensure that $\beta(\goaldist(\state_0),t) \leq d^{*}$ for all $t \geq \tau_{f}(d^{\circ},d^{*})$ when $\goaldist(\state_0) \leq d_{\max}(d^{\circ})$.

		(b) For any $t \geq 0$, we have $\beta(\goaldist(\state_0),t) \leq \beta(d_{\max}(d^{\circ}),0) = \delta(d^{\circ})$ when $\goaldist(\state_0) \leq d_{\max}(d^{\circ})$.
	\end{whiteproof}

	\textit{Conclusion the proof of Theorem \ref{thm:unform_goal_reaching}}. From Propositions~\ref{prop:T} and ~\ref{prop:Tq}, the total number of times the base policy is chosen is at most $\tau(d^{\circ}) + \mathcal{T}_{\bar{\rho}}$ almost surely.

	From Propositions~\ref{prop:disallowed_superset} and~\ref{prop:next_state_invokefallback}, whenever we switch to the fallback policy, the state satisfies $\goaldist(\state) \leq d_{\max}(d^{\circ})$.

	From Proposition~\ref{prop:Tfallback}, after running the fallback policy for $\tau_{f}(d^{\circ},d^{*})$ steps from any such state, we ensure with probability at least $1-\eps$ that the system stays within $d^{*}$ of the goal thereafter.

	Therefore, by defining the reaching time as:
	\begin{equation}
		\label{eq:Tcalfwpolicy}
		T(d^{\circ},d^{*}) := (\tau(d^{\circ}) + \mathcal{T}_{\bar{\rho}})\tau_{f}(d^{\circ},d^{*})
	\end{equation}
	We establish all three claims:

	Claim~\ref{claim:uniform_overshoot_bound} follows from Proposition~\ref{prop:Tfallback}(b), showing that the maximum deviation from the goal is bounded by $\delta(d^{\circ})$.

	Claim~\ref{claim:uniform_reaching_time} follows as the base policy is used at most $\tau(d^{\circ}) + \mathcal{T}_{\bar{\rho}}$ times, and $\tau_f(d^{\circ},d^{*})$ fallback steps suffice to maintain the system within $d^*$ of the goal with probability $1-\eps$, whether starting from the initial state or after base policy use.

	Claim~\ref{claim:distribution_reaching_time} follows directly from Proposition~\ref{prop:Tq}(c) and the definition of $T(d^{\circ},d^{*})$ in~\eqref{eq:Tcalfwpolicy}.
\end{proof}

\begin{corollary}
	If we set $\bar{\rho}_t = \lambda^t p_{\text{relax}}$, with $\lambda \in (0,1)$ and $p_{\text{relax}} \in [0,1]$,
	then for all $t \ge 1$ we have
	\begin{multline}\label{eq:lower_bound_prob}
		\PP{T(d^{\circ},d^{*}) \le (\tau(d^{\circ}) + t)\, \tau_{f}(d^{\circ},d^{*})} \\ \geq
		\exp\left(\tfrac{-\lambda^t p_{\text{relax}}}{(1 - \lambda)(1 - \lambda^t p_{\text{relax}})}\right).
	\end{multline}
\end{corollary}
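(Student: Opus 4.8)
The plan is to start from the exact distributional identity furnished by Claim~\ref{claim:distribution_reaching_time} of Theorem~\ref{thm:unform_goal_reaching}, namely
\[
	\PP{T(d^{\circ},d^{*}) \le (\tau(d^{\circ}) + t)\, \tau_{f}(d^{\circ},d^{*})} = \prod_{k=t}^{\infty}\bigl(1 - \bar{\rho}_{k}\bigr),
\]
specialized to $\bar{\rho}_k = \lambda^k p_{\text{relax}}$, and then to lower-bound this infinite product. First I would check that the choice $\bar{\rho}_t = \lambda^t p_{\text{relax}}$ is admissible in the sense required by Algorithm~\ref{alg_calfwrapper}: it is summable, since $\sum_{t\ge0}\bar{\rho}_t = p_{\text{relax}}/(1-\lambda) < \infty$, so Claim~\ref{claim:distribution_reaching_time} indeed applies. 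I would also record that for $t \ge 1$ every factor of the product is strictly positive, because $k \ge t \ge 1$ gives $\lambda^k p_{\text{relax}} \le \lambda^t p_{\text{relax}} \le \lambda < 1$; this positivity is precisely why the statement is restricted to $t \ge 1$ (the corner case $p_{\text{relax}}=1$, $t=0$ would make a denominator vanish).

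Next I would pass to logarithms and invoke the elementary inequality $\log(1-x) \ge -x/(1-x)$, valid for $x \in [0,1)$ (verified by comparing the two sides at $x=0$ and their derivatives). This yields
\[
	\log \prod_{k=t}^{\infty}\bigl(1-\lambda^k p_{\text{relax}}\bigr) = \sum_{k=t}^{\infty}\log\bigl(1-\lambda^k p_{\text{relax}}\bigr) \ge -\sum_{k=t}^{\infty}\frac{\lambda^k p_{\text{relax}}}{1-\lambda^k p_{\text{relax}}}.
\]
Then, using that $k \ge t$ implies $1-\lambda^k p_{\text{relax}} \ge 1-\lambda^t p_{\text{relax}} > 0$, I would bound every denominator from below by $1-\lambda^t p_{\text{relax}}$, pull this constant out of the sum, and evaluate the remaining geometric series $\sum_{k=t}^{\infty}\lambda^k p_{\text{relax}} = \lambda^t p_{\text{relax}}/(1-\lambda)$. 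This gives
\[
	\log \prod_{k=t}^{\infty}\bigl(1-\lambda^k p_{\text{relax}}\bigr) \ge \frac{-\lambda^t p_{\text{relax}}}{(1-\lambda)\bigl(1-\lambda^t p_{\text{relax}}\bigr)},
\]
and exponentiating both sides produces exactly~\eqref{eq:lower_bound_prob}.

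There is no substantial obstacle here; this is a short estimate on top of Claim~\ref{claim:distribution_reaching_time}. The only two points deserving a line of care are: (i) ensuring the infinite product is nonzero so that the logarithm is finite and the exponential bound is meaningful, which follows from summability of $\bar{\rho}_k$; and (ii) using the restriction $t \ge 1$ to keep $1-\lambda^t p_{\text{relax}}$ bounded away from $0$. Everything else is the standard $\log(1-x) \ge -x/(1-x)$ bound followed by summing a geometric series.
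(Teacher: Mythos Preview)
Your proposal is correct and follows essentially the same route as the paper: apply Claim~\ref{claim:distribution_reaching_time}, take logarithms, use $\log(1-x)\ge -x/(1-x)$, bound each denominator by $1-\lambda^t p_{\text{relax}}$, and sum the geometric series. Your extra checks (summability of $\bar{\rho}_t$ and the role of $t\ge 1$ in keeping the factors positive) are details the paper leaves implicit, but the argument is the same.
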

\begin{proof}
	The probability from~\eqref{eq:lower_bound_prob} is:
	\[
		\prod_{k = t}^{\infty}\bigl(1 - \lambda^k\,p_{\text{relax}}\bigr)
		\;=\;
		\exp\left(\sum_{k = t}^{\infty}\log\bigl(1 - \lambda^k\,p_{\text{relax}}\bigr)\right).
	\]
	Note that $\log(1 - x)\ge\tfrac{-x}{1 - x}$ for $x \in [0,1)$,
	since the function $\log(1 - x)+\tfrac{x}{1 - x}$ is zero at $x=0$ and has nonpositive derivative on $[0,1)$. Thus, for $t\ge1$,
	\begin{equation*}
		\sum_{k = t}^{\infty}\log(1 - \lambda^kp_{\text{relax}}) \ge \sum_{k = t}^{\infty} \tfrac{-\lambda^kp_{\text{relax}}}{1 - \lambda^kp_{\text{relax}}} \ge
		\tfrac{-\lambda^t p_{\text{relax}}}{(1 - \lambda)(1 - \lambda^t p_{\text{relax}})},
	\end{equation*}
	which completes the proof.
\end{proof}
\section{ENVIRONMENTS}
\label{sec:environments}
\subsection{Pendulum-v1 Environment}
\begin{wrapfigure}{l}{0.2\linewidth}
	\includegraphics[width=\linewidth]{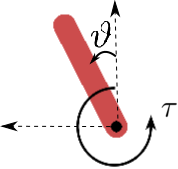}
\end{wrapfigure}

We use the \texttt{Pendulum-v1}~\cite{pendulum-v1} environment from Gymnasium, with dynamics:
\begin{equation*}
	\dot{\vartheta} = \omega \quad \dot{\omega} = -\frac{3g}{2l}\sin(\vartheta) + \frac{3}{ml^2}\tau
\end{equation*}
where $\vartheta$ is the angle, $\omega$ is angular velocity, and $\tau \in [-2,2]$ is the control torque (action). The initial state is randomly sampled from $(\vartheta, \omega) \sim \text{Uniform}([-\pi, \pi] \times [-1, 1])$. The observation is $[\cos(\vartheta), \sin(\vartheta), \omega]$. The goal is to stabilize the pendulum in the upright positition which corresponds to $\vartheta=0$. We define the goal set $\G$ as follows:
\begin{equation*}
	\G := \left\{\max\{|\cos \vartheta - 1|, |\sin\vartheta|\} \le\tfrac{1}{20} \land |\omega| \le \tfrac{3}{10}\right\}.
\end{equation*} 

The dynamics are integrated using the Euler scheme at 20 Hz. Episodes truncate after 200 steps.  The reward function is
\(
	r(\vartheta,\omega,\tau) = -\left(\arccos^2(\cos(\vartheta)) + 0.1\omega^2 + 0.001\tau^2\right).
\)
\subsection{CartPoleSwingup Environment}
The system dynamics are described by:
\begin{equation*}
	\ddot{x} = \tfrac{F + m_p l \dot{\vartheta}^2 \sin\vartheta - m_p g \sin\vartheta \cos\vartheta}{m_c + m_p\sin^2\vartheta}, \;
	\ddot{\vartheta} = \tfrac{g \sin\vartheta - \ddot{x}\cos\vartheta}{l},
\end{equation*}
where $x$ is the cart position, $\vartheta$ is the pole angle, $m_c=1.0$ kg is the cart mass, $m_p=0.1$ kg is the pole mass, $l=0.5$ m is the pole length, $g=9.8$ m/s$^2$ is the gravitational acceleration, and $F \in [-10, 10]$ N is the force applied to the cart (the control action).
\begin{wrapfigure}{l}{0.2\linewidth}
	\includegraphics[width=\linewidth]{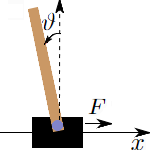}
\end{wrapfigure}
The initial state is randomly sampled from:
\((\vartheta, x, \dot{\vartheta}, \dot{x}) \sim \text{Uniform}\left([0, 2 \pi] \times [-1, 1]^3\right)\). The goal is to swing up and balance the pole in the upright position which corresponds to $\vartheta = 0$ while keeping the cart near $x = 0$. 
We define the goal set 
$\G$ as follows:
\begin{equation*}
	\left\{\max\{|\cos \vartheta - 1|, |\sin\vartheta|, |\dot{\vartheta}|\} \!\le\! \tfrac{1}{20} \land \max\{|\dot{x}|, |x|\} \!\le \!\tfrac{3}{10}\right\}.
\end{equation*}

The system is integrated using the Euler scheme at 50 Hz. Episodes truncate after 200 steps during training and 1000 steps during evaluation. Episodes terminate if either of the following conditions are met: $|x| > 5$, $|\dot{x}| > 8$, or $|\dot{\vartheta}| > 10$. The 5D observation is $[x, \dot{x}, \cos\vartheta, \sin\vartheta, \dot{\vartheta}]$ with reward function 
\[
r(x,\vartheta,\dot{x},\dot{\vartheta},F)\! = \!
	 - \tfrac{1}{2}\arccos^2(\cos\vartheta) - \tfrac{1}{2}x^2 - \tfrac{1}{20}\dot{\vartheta}^2 - \tfrac{1}{20}\dot{x}^2.
\]

\section{EXPERIMENTAL RESULTS}\label{sec:experiments}

For both environments, we implement fallback controllers (\(\fallbackpolicy\)) as the combination of energy-based and PD controllers.

We train base policies (\(\basepolicy\)) and their corresponding value functions (\(\Valuebase\)) using \emph{Proximal Policy Optimization} (PPO)~\cite{Schulman2017ProximalPolicy} from \texttt{Stable-Baselines3}~\cite{stable-baselines3}.
Training runs for 102k timesteps on Pendulum-v1 and 270k timesteps on CartPoleSwingup, with checkpoints saved at early, mid-stage, and late training phases.

We evaluate CALF-Wrapper using three operational modes by fixing $\lambda=0.9999$ and $\nu=0.01$ while varying $p_{\mathrm{relax}}$. (1)~\textit{Conservative} ($p_{\mathrm{relax}} = 0$): Prioritizes safety by defaulting to $\fallbackpolicy$ when uncertain. (2)~\textit{Balanced} ($p_{\mathrm{relax}} = 0.5$): Maintains equilibrium between exploration and safety. (3)~\textit{Brave} ($p_{\mathrm{relax}} = 0.95$): Maximizes exploitation of $\basepolicy$.

Performance assessment involves 30 independent trials from randomly sampled initial conditions, measuring both mean cumulative reward and goal-reaching success rate (percentage of trials reaching the goal set). 
The results are shown in \Cref{fig:calf_relax_sweeps}.

Our findings reveal clear performance-safety tradeoffs:

The \textit{Conservative Mode} achieved 100\% goal-reaching success across all experiments, consistently outperforming the standalone fallback policy. However, it delivers lower rewards compared to other modes during mid-stage and late training phases.

The \textit{Balanced Mode} demonstrates poor performance in early training phase but significantly improves in mid-stage and late phases, achieving both goal-reaching guarantees and higher cumulative rewards than the conservative approach.

The \textit{Brave Mode} closely tracks the base policy's reward performance throughout all phases. While this results in lower goal-reaching success during early and mid-stage phases, it achieves both optimal reward and 100\% goal-reaching success in the late phase, effectively combining the strengths of both policies.

Despite theoretical guarantees, for \textit{Balanced} and \textit{Brave} modes, goal-reaching rate occasionally falls below 100\% on early and mid-stage phases since the theoretically required number of steps to reach the goal set $\G$ exceeds the evaluation episode length.

\begin{figure}[h]
	\centering
	\includegraphics[width=\linewidth]{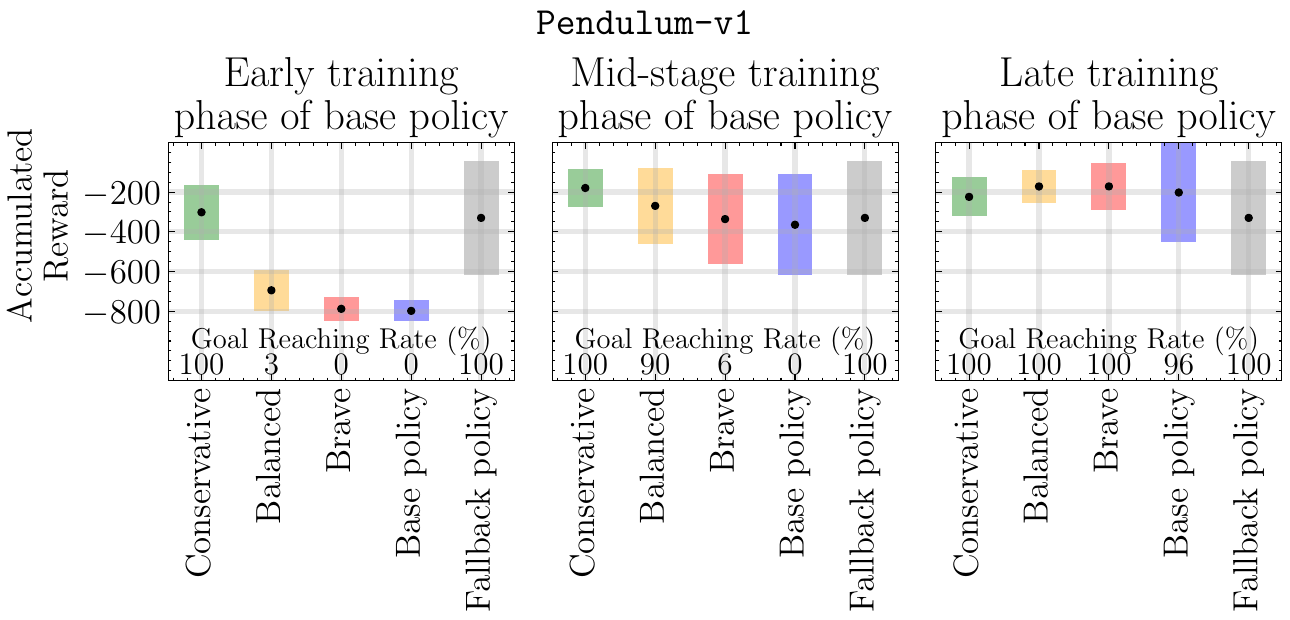}
	\includegraphics[width=\linewidth]{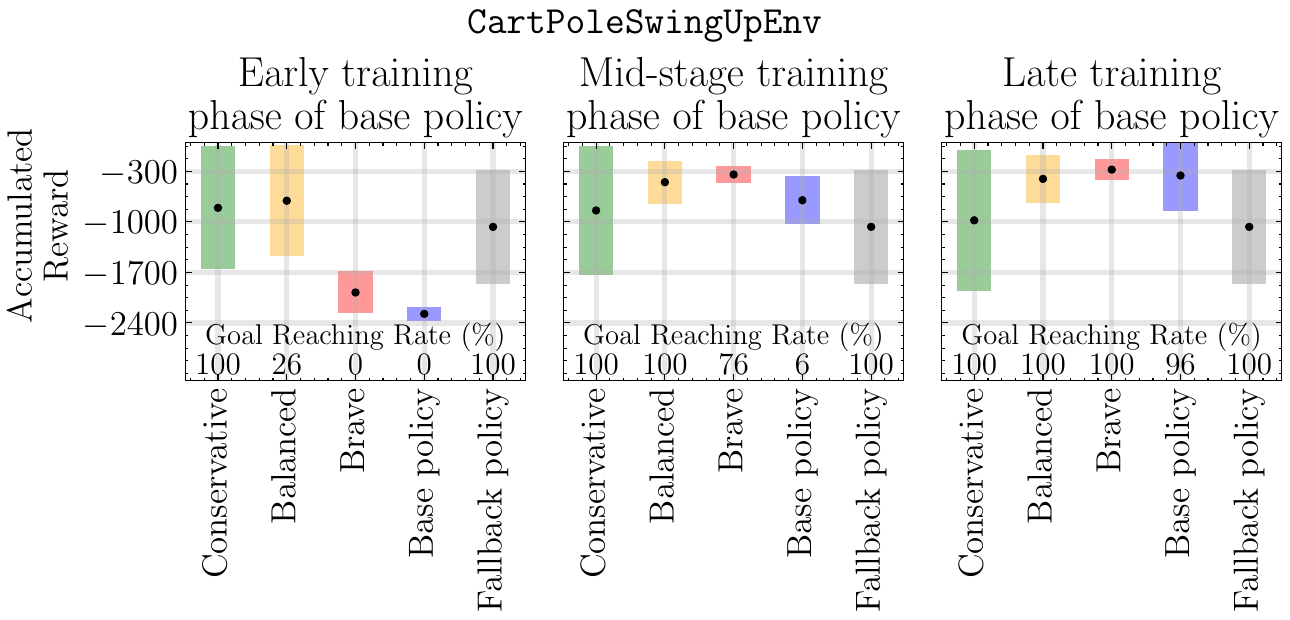}
	\caption{Comparative analysis of CALF-Wrapper modes on \texttt{Pendulum-v1} and \texttt{CartPoleSwingup} environments: Conservative ($p_{\mathrm{relax}} = 0$), Balanced ($p_{\mathrm{relax}} = 0.5$), and Brave ($p_{\mathrm{relax}} = 0.95$) modes against base and fallback policies. Box plots show mean cumulative rewards with standard deviations across 30 trials. }
	\label{fig:calf_relax_sweeps}
\end{figure}

\bibliographystyle{IEEEtran}
\bibliography{
	bib/CALFW.bib
}

\begin{thebibliography}{10}
\providecommand{\url}[1]{#1}
\csname url@rmstyle\endcsname
\providecommand{\newblock}{\relax}
\providecommand{\bibinfo}[2]{#2}
\providecommand\BIBentrySTDinterwordspacing{\spaceskip=0pt\relax}
\providecommand\BIBentryALTinterwordstretchfactor{4}
\providecommand\BIBentryALTinterwordspacing{\spaceskip=\fontdimen2\font plus
\BIBentryALTinterwordstretchfactor\fontdimen3\font minus
  \fontdimen4\font\relax}
\providecommand\BIBforeignlanguage[2]{{%
\expandafter\ifx\csname l@#1\endcsname\relax
\typeout{** WARNING: IEEEtran.bst: No hyphenation pattern has been}%
\typeout{** loaded for the language `#1'. Using the pattern for}%
\typeout{** the default language instead.}%
\else
\language=\csname l@#1\endcsname
\fi
#2}}

\bibitem{Akkaya2019Solvingrubiks}
{OpenAI}, I.~Akkaya, M.~Andrychowicz, M.~Chociej, M.~Litwin, B.~McGrew,
  A.~Petron, A.~Paino, M.~Plappert, G.~Powell, R.~Ribas, J.~Schneider,
  N.~Tezak, J.~Tworek, P.~Welinder, L.~Weng, Q.~Yuan, W.~Zaremba, and L.~Zhang,
  ``Solving rubik's cube with a robot hand,'' \emph{arXiv preprint
  arXiv:1910.07113}, 2019.

\bibitem{Kumar2016Optimalcontrol}
V.~Kumar, E.~Todorov, and S.~Levine, ``Optimal control with learned local
  models: Application to dexterous manipulation,'' in \emph{2016 IEEE
  International Conference on Robotics and Automation (ICRA)}, 2016, pp.
  378--383.

\bibitem{Silver2018generalreinfor}
D.~Silver, T.~Hubert, J.~Schrittwieser, I.~Antonoglou, M.~Lai, A.~Guez,
  M.~Lanctot, L.~Sifre, D.~Kumaran, T.~Graepel, T.~Lillicrap, K.~Simonyan, and
  D.~Hassabis, ``A general reinforcement learning algorithm that masters chess,
  shogi, and go through self-play,'' \emph{Science}, vol. 362, no. 6419, pp.
  1140--1144, 2018.

\bibitem{Vinyals2019Grandmasterlev}
O.~Vinyals, I.~Babuschkin, W.~M. Czarnecki, M.~Mathieu, A.~Dudzik, J.~Chung,
  D.~H. Choi, R.~Powell, T.~Ewalds, P.~Georgiev, J.~Oh, D.~Horgan, M.~Kroiss,
  I.~Danihelka, A.~Huang, L.~Sifre, T.~Cai, J.~P. Agapiou, M.~Jaderberg, A.~S.
  Vezhnevets, R.~Leblond, T.~Pohlen, V.~Dalibard, D.~Budden, Y.~Sulsky,
  J.~Molloy, T.~L. Paine, C.~Gulcehre, Z.~Wang, T.~Pfaff, Y.~Wu, R.~Ring,
  D.~Yogatama, D.~W{\"u}nsch, K.~McKinney, O.~Smith, T.~Schaul, T.~Lillicrap,
  K.~Kavukcuoglu, D.~Hassabis, C.~Apps, and D.~Silver, ``Grandmaster level in
  starcraft {II} using multi-agent reinforcement learning,'' \emph{Nature},
  vol. 575, no. 7782, pp. 350--354, 2019.

\bibitem{Fujimoto2018}
S.~Fujimoto, H.~van Hoof, and D.~Meger, ``Addressing function approximation
  error in actor-critic methods,'' in \emph{Proceedings of the 35th
  International Conference on Machine Learning (ICML)}, ser. Proceedings of
  Machine Learning Research, vol.~80, 2018, pp. 1587--1596.

\bibitem{Haarnoja2019}
T.~Haarnoja, A.~Zhou, K.~Hartikainen, G.~Tucker, S.~Ha, J.~Tan, V.~Kumar,
  H.~Zhu, A.~Gupta, P.~Abbeel, and S.~Levine, ``Soft actor-critic algorithms
  and applications,'' \emph{arXiv preprint arXiv:1812.05905}, 2019.

\bibitem{Schulman2017ProximalPolicy}
J.~Schulman, F.~Wolski, P.~Dhariwal, A.~Radford, and O.~Klimov, ``Proximal
  policy optimization algorithms,'' \emph{arXiv preprint arXiv:1707.06347},
  2017.

\bibitem{Osinenko2024CriticLyapunov}
P.~Osinenko, G.~Yaremenko, R.~Zashchitin, A.~Bolychev, S.~Ibrahim, and
  D.~Dobriborsci, ``Critic as lyapunov function (calf): a model-free,
  stability-ensuring agent,'' in \emph{2024 IEEE 63rd Conference on Decision
  and Control (CDC)}, 2024, pp. 2517--2524.

\bibitem{stable-baselines3}
\BIBentryALTinterwordspacing
A.~Raffin, A.~Hill, A.~Gleave, A.~Kanervisto, M.~Ernestus, and N.~Dormann,
  ``Stable-baselines3: Reliable reinforcement learning implementations,''
  \emph{Journal of Machine Learning Research}, vol.~22, no. 268, pp. 1--8,
  2021. [Online]. Available: \url{http://jmlr.org/papers/v22/20-1364.html}
\BIBentrySTDinterwordspacing

\bibitem{huang2022cleanrl}
\BIBentryALTinterwordspacing
S.~Huang, R.~F.~J. Dossa, C.~Ye, J.~Braga, D.~Chakraborty, K.~Mehta, and J.~G.
  Araújo, ``Cleanrl: High-quality single-file implementations of deep
  reinforcement learning algorithms,'' \emph{Journal of Machine Learning
  Research}, vol.~23, no. 274, pp. 1--18, 2022. [Online]. Available:
  \url{http://jmlr.org/papers/v23/21-1342.html}
\BIBentrySTDinterwordspacing

\bibitem{perkins2002lyapunov}
T.~J. Perkins and A.~G. Barto, ``Lyapunov design for safe reinforcement
  learning,'' \emph{Journal of Machine Learning Research}, vol.~3, pp.
  803--832, 2002.

\bibitem{chow2018lyapunov}
Y.~Chow, O.~Nachum, E.~Du\'{e}nez-Guzm\'{a}n, and M.~Ghavamzadeh, ``A
  lyapunov-based approach to safe reinforcement learning,'' in \emph{Advances
  in Neural Information Processing Systems 31 (NeurIPS)}, 2018, pp. 8103--8112.

\bibitem{seto1998}
D.~Seto, B.~Krogh, L.~Sha, and A.~Chutinan, ``The {Simplex} architecture for
  safe online control system upgrades,'' \emph{Proceedings of the 1998 American
  Control Conference}, vol.~6, pp. 3504--3508, 1998.

\bibitem{ames2019}
A.~D. Ames, S.~Coogan, M.~Egerstedt, G.~Notomista, K.~Sreenath, and P.~Tabuada,
  ``Control barrier functions: Theory and applications,'' \emph{2019 18th
  European Control Conference (ECC)}, pp. 3420--3431, 2019.

\bibitem{alshiekh2018shielding}
M.~Alshiekh, R.~Bloem, R.~Ehlers, B.~K\"{o}nighofer, S.~Niekum, and U.~Topcu,
  ``Safe reinforcement learning via shielding,'' in \emph{Proceedings of the
  32nd AAAI Conference on Artificial Intelligence (AAAI)}, 2018, pp.
  2669--2678.

\bibitem{dalal2018continuous}
G.~Dalal, K.~Dvijotham, M.~Vecerik, T.~Hester, C.~Paduraru, and Y.~Tassa,
  ``Safe exploration in continuous action spaces,'' \emph{arXiv preprint
  arXiv:1801.08757}, 2018.

\bibitem{cheng2019endtoend}
R.~Cheng, G.~Orosz, R.~M. Murray, and J.~W. Burdick, ``End-to-end safe
  reinforcement learning through barrier functions for safety-critical
  continuous control tasks,'' in \emph{Proceedings of the 33rd AAAI Conference
  on Artificial Intelligence (AAAI)}, 2019, pp. 3387--3395.

\bibitem{achiam2017cpo}
J.~Achiam, D.~Held, A.~Tamar, and P.~Abbeel, ``Constrained policy
  optimization,'' in \emph{Proceedings of the 34th International Conference on
  Machine Learning (ICML)}, 2017, pp. 22--31.

\bibitem{tessler2019rcpo}
C.~Tessler, D.~J. Mankowitz, and S.~Mannor, ``Reward constrained policy
  optimization,'' in \emph{Proceedings of the 7th International Conference on
  Learning Representations (ICLR)}, 2019.

\bibitem{yang2022cup}
L.~Yang, J.~Ji, J.~Dai, L.~Zhang, B.~Zhou, P.~Li, Y.~Yang, and G.~Pan,
  ``Constrained update projection approach to safe policy optimization,'' in
  \emph{Advances in Neural Information Processing Systems 35 (NeurIPS)}, 2022.

\bibitem{gym-wrapper}
``Gymnasium wrapper,'' \url{https://gymnasium.farama.org/api/wrappers/},
  accessed: 2025-03-17.

\bibitem{billingsley1995probability}
P.~Billingsley, \emph{Probability and Measure}, 3rd~ed.\hskip 1em plus 0.5em
  minus 0.4em\relax John Wiley \& Sons, 1995.

\bibitem{Sontag1998Comments}
E.~D. Sontag, ``Comments on integral variants of {ISS},'' \emph{Systems and
  Control Letters}, vol.~34, no. 1-2, pp. 93--100, 1998.

\bibitem{pendulum-v1}
``Pendulum-v1 environment in gymnasium,''
  \url{https://gymnasium.farama.org/environments/classic_control/pendulum/},
  accessed: 2025-03-17.

\end{thebibliography}

\end{document}